\def\eqref#1{equation~\ref{#1}}
\def\1{\bm{1}}
\DeclareMathAlphabet{\mathsfit}{\encodingdefault}{\sfdefault}{m}{sl}
\SetMathAlphabet{\mathsfit}{bold}{\encodingdefault}{\sfdefault}{bx}{n}
\DeclareMathOperator*{\argmax}{arg\,max}
\theoremstyle{plain}
\newtheorem{theorem}{Theorem}[section]
\newtheorem{proposition}[theorem]{Proposition}
\newtheorem{lemma}[theorem]{Lemma}
\theoremstyle{definition}
\newtheorem{definition}[theorem]{Definition}
\theoremstyle{remark}
\def\blfootnote{\xdef\@thefnmark{}\@footnotetext}
\title{The Update-Equivalence Framework for Decision-Time Planning}
\author{Samuel Sokota$^{\dag 1}$ \quad Gabriele Farina$^{\dag 2}$ \\  \textbf{David J. Wu}$^{\dag}$ \quad \textbf{Hengyuan Hu}$^{3}$ \quad \textbf{Kevin A. Wang}$^{\dag 4}$ \quad \textbf{J. Zico Kolter}$^{1,5}$ \quad \textbf{Noam Brown}$^{ \dag 6}$\\
$^{\dag}$Work done at Meta AI \enspace $^1$Carnegie Mellon University \enspace $^2$Massachusetts Institute of Technology\\ $^3$Stanford University \enspace $^4$Brown University \enspace $^5$Bosch AI \enspace $^6$OpenAI\\
\texttt{ssokota@andrew.cmu.edu} \enspace \texttt{gfarina@mit.edu}
}
\begin{document}

\maketitle

\begin{abstract}
\looseness=-1
The process of revising (or constructing) a policy at execution time---known as \textit{decision-time planning}---has been key to achieving superhuman performance in perfect-information games like chess and Go.
A recent line of work has extended decision-time planning to \emph{imperfect-information} games, leading to superhuman performance in poker.
However, these methods involve solving subgames whose sizes grow quickly in the amount of non-public information, making them unhelpful when the amount of non-public information is large.
Motivated by this issue, we introduce an alternative framework for decision-time planning that is not based on solving subgames, but rather on \emph{update equivalence}.
In this update-equivalence framework, decision-time planning algorithms replicate the updates of last-iterate algorithms, which need not rely on public information. This facilitates scalability to games with large amounts of non-public information.
Using this framework, we derive a provably sound search algorithm for fully cooperative games based on mirror descent and a search algorithm for adversarial games based on magnetic mirror descent.
We validate the performance of these algorithms in cooperative and adversarial domains, notably in Hanabi, the standard benchmark for search in fully cooperative imperfect-information games.
Here, our mirror descent approach exceeds or matches the performance of public information-based search while using two orders of magnitude less search time.
This is the first instance of a non-public-information-based algorithm outperforming public-information-based approaches in a domain they have historically dominated.
\end{abstract}

\section{Introduction}

Decision-time planning (DTP) is the process of revising (or even constructing from scratch) a policy immediately before using that policy to make a decision.
In settings involving strategic decision making, the benefits of DTP can be quite large.
For example, while decision-time planning approaches have achieved superhuman performance in chess \citep{CAMPBELL200257}, Go \citep{SilverHuangEtAl16nature}, and poker \citep{deepstack,Brown2018SuperhumanAF,pluribus}, approaches without decision-time planning remain non-competitive.

Currently, the dominant paradigm for DTP is based on solving (or improving the policy as much as possible in) \textit{subgames}.
In perfect-information games, the subgame at a state is naturally defined as a game beginning from that state that proceeds according to the same rules as the original game.
In contrast, the definition of subgame is nuanced in imperfect-information games, due to the presence of private information.
While multiple definitions have been proposed, all those that facilitate sound guarantees \citep{nayyar,Dibangoye,Oliehoek2013SufficientPS,cfrd,maxmargin,safenested,deepstack,brown2020rebel,sokota2023abstracting} rely on variants of a distribution known as a \textit{public belief state (PBS)}\footnote{Note that different literatures use different names to refer to this distribution and its variants.}---\emph{i.e.}, the joint posterior over each player's decision point, given public information and the joint policy that has been played so far.

Unfortunately, PBS-based planning has a fundamental limitation: it is ineffective in settings with large amounts of non-public information.
This shortcoming arises because PBS-based DTP distributes its computational budget across all decision points supported by the PBS and because the number of such decision points increases with the amount of non-public information in the game.
When the amount of non-public information is small, such as in Texas Hold'em, where there are only ${52 \choose 2} = 1326$ possible private hands, this is a non-issue because it is feasible to construct strong policies for decision points supported by the PBS.
However, as the amount of non-public information grows, meaningfully improving for all such decision points becomes increasingly inviable.
For example, in games where the only public knowledge is the amount of time that has passed, PBS-based subgame solving requires reasoning about every decision point consistent with the time step.
In the worst case---games with no public information at all---PBS-based subgame solving requires solving the entire game, defeating the entire purpose of DTP.

In this work, motivated by the insufficiency of PBS-based conceptualizations, we investigate an alternative perspective on DTP that we call the \emph{update-equivalence} framework for DTP.
Rather than viewing DTP algorithms as solving subgames, the update-equivalence framework views them as implementing the updates of \textit{last-iterate algorithms}---a term that we use to refer to algorithms whose last iterates possess desirable properties, such as monotonic improvements in expected return, decreases in exploitability, or convergence to an equilibrium point.\footnote{For example, this definition of last-iterate algorithm would include policy iteration in Markov decision processes, since its last iterate converges to the optimal policy, but exclude algorithms for imperfect-information games possessing only average-iterate guarantees, such as counterfactual regret minimization \citep{cfr}
and fictitious play \citep{brown:fp1951}.}
This framework offers a simple mechanism for generating and analyzing principled DTP algorithms, contrasting PBS-based algorithms and analyses, which are notorious, especially in adversarial games, for their reconditeness.
Furthermore, importantly, because these last-iterate algorithms need not involve PBSs, the update-equivalence framework for DTP is not inherently limited by the amount of non-public information.
Indeed, the most natural instances of the framework focus their entire computation budget on the decision point that is actually occupied by the planning agent, entirely ignoring counterfactual decision points supported by the PBS.

\looseness=-1
We invoke the update-equivalence framework to derive a DTP algorithm based on mirror descent \citep{nemirovski1983problem,BECK2003167}.
We prove that this algorithm, which we call \emph{mirror descent search (MDS)}, guarantees policy improvement in fully cooperative games.
We test MDS in Hanabi \citep{BARD2020103216}, the standard benchmark for search in fully cooperative imperfect-information games.
Remarkably, MDS exceeds or matches the performance of state-of-the-art PBS methods while using two orders of magnitude less search time.
\emph{This is the first instance of a non-public-information-based algorithm outperforming public-information-based approaches in a setting they've historically dominated.}
Furthermore, we show that the performance gap between MDS and PBS methods widens as the amount of non-public information increases, reflecting MDS's superior scalability properties.
In addition to MDS, we also introduce a search algorithm for adversarial settings based on magnetic mirror descent (MMD) \citep{mmd} that we call \emph{MMD search (MMDS)}.
We show empirically that MMDS significantly improves performance in games where PBS methods are essentially inapplicable due to the large amount of non-public information.

\section{Background and Notation}

We use the formalism of finite-horizon partially observable stochastic games, or POSGs, for short \citep{posg}.
POSGs are a class of games equivalent to perfect-recall timeable extensive-form games \citep{timeable,fosg}. 
To describe finite-horizon POSGs, we use  $s \in \mathbb{S}$ to notate Markov states, $a_i \in \mathbb{A}_i$ to notate player $i$'s actions, $o_i \in \mathbb{O}_i$ to notate player $i$'s observations, $h_i \in \mathbb{H}_i = \bigcup_t (\mathbb{O}_i \times \mathbb{A}_i)^{t} \times \mathbb{O}_i$ to notate $i$'s decision points, $a \in \mathbb{A} = \times_i \mathbb{A}_i$ to notate joint actions, $h \in \mathbb{H} = \times_i \mathbb{H}_i$ to notate histories, $\mathcal{R}_i \colon \mathbb{S} \times \mathbb{A} \to \mathbb{R}$ to notate player $i$'s reward function, $\mathcal{T} \colon \mathbb{S} \times \mathbb{A} \to \Delta(\mathbb{S})$ to notate the transition function, $\mathcal{O}_i \colon \mathbb{S} \times \mathbb{A} \to \mathbb{O}_i$ to notate player $i$'s observation function.

Each player $i$ interacts with the game via a policy $\pi_i$, which maps decision points to distributions over actions $\pi_i \colon \mathbb{H}_i \to \Delta (\mathbb{A}_i)$.
Given a joint policy $\pi = (\pi_i)_{i}$, we use the notation $\mathcal{P}_{\pi}$ to notate probability functions associated with the joint policy $\pi$.
For example, $\mathcal{P}_{\pi}(h \mid h_i)$ denotes the probability of history $h$ under joint policy $\pi$ given that player $i$ occupies decision point $h_i$.

The objective of player~$i$ is to maximize its expected return $\mathcal{J}_i(\pi) \coloneqq \mathbb{E}_{\pi} [G]$ where the return is defined as the sum of rewards $G \coloneqq \sum_t \mathcal{R}_i(S^t, A^t)$
and where we use capital letters to denote random variables.
(We maintain the convention of using capital letters for random variables throughout the paper.)
If the return for each player is the same, we say the game is \textit{common payoff}.
If there are two players and the returns of these players are the opposite of one another, we say the game is \textit{two player zero sum} (2p0s).

We notate the expected value for an agent at a history $h^t$ under joint policy $\pi$ as 
\[v^{\pi}_i(h^t) = \mathbb{E}_{\pi} [G^{\geq t} \mid h^t] = \mathbb{E}_{\pi} \Bigg[\sum_{t'\geq t} \mathcal{R}_i(S^{t'}, A^{t'}) \mid h^t\Bigg].\]
The expected action value for an agent at a history $h^t$ taking action $a_i^t$ under joint policy $\pi$ is denoted
\[q_i^{\pi}(h^t, a_i^t) = \mathbb{E}_{\pi}[G^{\geq t} \mid h^t, a_i^t] = \mathbb{E}_{\pi} \left[\mathcal{R}_i(S^t, A^t) + v_i^{\pi}(H^{t+1}) \mid h^t, a_i^t\right].\]

The expected \textit{value of a decision point $h^t_i$} is the weighted sum of history values, where each history is weighted by its probability, conditioned on the decision point and historical joint policy
$v^{\pi}_i(h^t_i) = \mathbb{E}_{\pi} \left[ v_i^{\pi}(H^t) \mid h^t_i \right]$.
Similarly, the expected \textit{action value for action $a_i^t$ at a decision point $h^t_i$} is defined as
$q^{\pi}_i(h^t_i, a_i^t) = \mathbb{E}_{\pi} \left[q_i^{\pi}(H^t, a_i^t) \mid h^t_i \right]$.

\section{The Framework of Update Equivalence}

The framework of update equivalence rests on the idea that last-iterate algorithms and DTP algorithms may be viewed, implicitly or explicitly, as inducing updates.
For last-iterate algorithms, this induced update refers to the mapping from one iterate (\emph{i.e.}, policy) to the next.
For DTP algorithms, this induced update refers to the mapping from the policy to be revised (called the blueprint policy) to the policy produced by the search.
We define the idea that these updates may be equivalent below.

\begin{definition}[Update equivalence] \label{def:ue}
    For some fixed game, a decision-time planning algorithm and a last-iterate algorithm are \emph{update equivalent} if, for any policy $\pi^t$ and any information state $h_i$, the distribution of actions played by the decision-time planning algorithm at $h_i$ with blueprint $\pi^t$ is equal to the distribution of actions the last-iterate algorithm would play at $h_i$ on iteration $t+1$ given that it had played $\pi^t$ on iteration $t$.
\end{definition}

There are at least two benefits to considering update-equivalence relationships for decision-time planning.
First, it begets an approach to analyzing DTP algorithms via their last-iterate algorithm counterparts.
This avenue of analysis simply requires determining whether the next iterate is improved from the previous, circumventing a host of complications that make the analysis of PBS approaches nuanced.
Second, and perhaps more importantly, it enables the generation of new principled DTP algorithms via last-iterate algorithms with desirable guarantees.

We illustrate these benefits in the two coming subsections. In \Cref{sec:action value based planners}, we focus on last-iterate algorithms that operate using action-value feedback, providing a general procedure to generate DTP analogues of these algorithms and discussing three DTP algorithms constructed via this procedure, as well as their soundness.
In \Cref{sec:beyond} we also discuss last-iterate algorithms that do not operate using action-value feedback, demonstrating how the framework of update equivalence can be used to generate or justify DTP algorithms with more complex structure.

\subsection{Action-Value-Based Planners}\label{sec:action value based planners}

\begin{wrapfigure}{R}{0.50\textwidth}
\vspace{-7mm}
\scalebox{.9}{
\begin{minipage}{0.55\textwidth}
\begin{algorithm}[H]
   \caption{Update-Equivalent Search for Last-Iterate Algorithm with Action-Value Feedback and Update $\mathcal{U}$}
   \label{alg:global2search}
\begin{algorithmic}
   \STATE {\bfseries Input:} decision point $h_i^t$, joint policy $\pi$
   \STATE Initialize $\bar{q}[a]$ as running mean tracker for $a \in \mathbb{A}_i$
   \REPEAT
   \STATE Sample history $H^t \sim \mathcal{P}_{\pi}(H^t \mid h_i^t)$
   \FOR{$a \in \mathbb{A}_i$}
   \STATE Sample return $G^{\geq t} \sim \mathcal{P}_{\pi}(G^{\geq t} \mid H^t, a)$
   \STATE $\bar{q}[a].\text{update\_running\_mean}(G^{\geq t})$
   \ENDFOR
   \UNTIL{search budget is exhausted}
   \STATE \textbf{return} $\mathcal{U}(\pi(h_i^t), \bar{q})$
\end{algorithmic}
\end{algorithm}
\end{minipage}}
\end{wrapfigure}

In \Cref{alg:global2search}, we give a procedure for constructing DTP algorithms that are update-equivalent to last-iterate algorithms operating on action-value feedback. Specifically, let the last-iterate algorithm be represented by a continuous function $\mathcal{U} \colon (\pi^t(h_i), q^{\pi^t}(h_i)) \mapsto \pi^{t+1}(h_i)$ mapping the current policy $\pi^t$ to the next policy $\pi^{t+1}$ by incorporating the action-value feedback $q^{\pi^t}$ for the current policy. 
\Cref{alg:global2search} repeatedly samples histories starting from a given decision point and the joint policy and acquires estimates of the sample returns for these histories via rollouts. 
When the computational budget has been exhausted, \Cref{alg:global2search} applies the update function $\mathcal{U}$ to the current joint policy and estimated action values. This leads to an update-equivalent DTP algorithm, as we establish next.

\begin{proposition} \label{prop:equiv}
Consider a last-iterate algorithm operating with action-value feedback whose update function $\mathcal{U}$ is continuous.
Then, as the number of rollouts goes to infinity, the output of \Cref{alg:global2search}, conditioned on $\mathcal{U}$ and given inputs $h_i^t, \pi$, converges in probability to $\mathcal{U}(\pi^t(h_i), q^{\pi^t}(h_i))(h_i)$.
\end{proposition}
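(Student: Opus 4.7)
The plan is to reduce the claim to two standard probabilistic tools: the weak law of large numbers (LLN) and the continuous mapping theorem. The core observation is that, conditional on the input decision point $h_i^t$ and blueprint $\pi$, each pass of the repeat loop in \Cref{alg:global2search} contributes, for every action $a \in \mathbb{A}_i$, an unbiased estimate of $q_i^\pi(h_i^t, a)$ to the running mean $\bar q[a]$. Once this is established, each coordinate of $\bar q$ converges in probability to the corresponding true action value, and the continuity of $\mathcal{U}$ pushes this convergence through to the output of the algorithm.

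First I would verify unbiasedness. Within one iteration, the sample deposited into $\bar q[a]$ is produced by drawing $H^t \sim \mathcal{P}_\pi(\cdot \mid h_i^t)$ and then $G^{\geq t} \sim \mathcal{P}_\pi(\cdot \mid H^t, a)$. The tower property combined with the two definitions of $q_i^\pi$ given in the background gives
\[
\mathbb{E}\bigl[G^{\geq t}\bigr] \;=\; \mathbb{E}_\pi\bigl[\mathbb{E}_\pi[G^{\geq t} \mid H^t, a]\,\big|\, h_i^t\bigr] \;=\; \mathbb{E}_\pi\bigl[q_i^\pi(H^t, a)\,\big|\, h_i^t\bigr] \;=\; q_i^\pi(h_i^t, a).
\]
Across distinct iterations, the contributed samples are i.i.d., and since a finite-horizon POSG has bounded rewards they have finite variance. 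The weak LLN then gives $\bar q[a] \to q_i^\pi(h_i^t, a)$ in probability for each $a \in \mathbb{A}_i$, and since $\mathbb{A}_i$ is finite the joint vector $\bar q$ converges in probability to $q_i^\pi(h_i^t, \cdot)$. Because $\mathcal{U}$ is continuous in its action-value argument, the continuous mapping theorem yields
\[
\mathcal{U}\bigl(\pi(h_i^t),\, \bar q\bigr) \;\xrightarrow{\;p\;}\; \mathcal{U}\bigl(\pi(h_i^t),\, q_i^\pi(h_i^t, \cdot)\bigr),
\]
which is exactly the stated limit.

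The one subtlety worth flagging is that, within a single iteration, the $|\mathbb{A}_i|$ rollout samples share the common sampled history $H^t$ and are therefore not independent of each other across actions. This does not affect the argument, however, since the LLN is applied to each coordinate $\bar q[a]$ separately and across iterations each coordinate receives an i.i.d.\ stream of unbiased, finite-variance samples. All remaining steps are routine given the continuity assumption on $\mathcal{U}$ stated in the proposition, so I do not expect any substantive obstacle.
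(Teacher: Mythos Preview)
Your proposal is correct and follows essentially the same approach as the paper: the paper's proof is the two-line observation that $\bar q$ converges in probability to $q_i^{\pi}(h_i)$ by the law of large numbers and that $\mathcal{U}$ is continuous. Your write-up simply fills in the details the paper leaves implicit---the tower-property verification of unbiasedness, the i.i.d.\ structure across iterations, finite variance from the finite horizon, and the explicit invocation of the continuous mapping theorem---so there is no substantive difference in route.
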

\begin{proof}
This proposition holds because $\hat{q}$ converges in probability to $q_i^{\pi}(h_i)$ by the law of large numbers and $\mathcal{U}$ is continuous.
\end{proof}

\textbf{Monte Carlo Search}\quad
Algorithm \ref{alg:global2search} enables us to relate last-iterate algorithms with DTP counterparts.
A notable existing instance of such a relationship had already been identified by \citet{mcs} between policy iteration (as the last-iterate algorithm) and Monte Carlo search (as the corresponding DTP algorithm).
Indeed, by plugging policy iteration's local update function into \Cref{alg:global2search} we recover exactly Monte Carlo search as a special case.
The local update induced by policy iteration can be written as
 \[\mathcal{U} \colon (\bar{\pi}, \bar{q}) \mapsto \argmax_{\bar{\pi}' \in \Delta(\mathbb{A}_i)} \langle \bar{\pi}', \bar{q} \rangle,\]
 where $\bar{\pi} \in \Delta(\mathbb{A}_i)$ and $\bar{q} \in \mathbb{R}^{|\mathbb{A}_i|}$.
In other words, policy iteration's update simply plays a greedy policy with respect to its action values.
By \Cref{prop:equiv}, when the computational budget given to the Monte Carlo search DTP algorithm grows, the policies produced converge to those of policy iteration. So, with enough budget, Monte Carlo search inherits all desirable properties enjoyed by policy iteration, including convergence to optimal policies in single-agent settings and convergence to Nash equilibrium in \emph{perfect-information} 2p0s games \citep{Littman96algorithmsfor}.

\textbf{Mirror Descent Search}\quad Algorithm \ref{alg:global2search} also enables us to give novel justification to DTP approaches in settings in which giving guarantees has been historically difficult.
One example of such as a setting is common-payoff games, where a na\"ive application of Monte Carlo search can lead to bad performance as a result of the posterior over histories induced by the search policy diverging too far from the blueprint policy's posterior \citep{bft}. 
We show that this issue can be resolved using mirror descent \citep{BECK2003167,nemirovski1983problem}, which is a general approach to optimizing objectives that penalizes the distance between iterates.
In the context of sequential decision making, a natural means of leveraging mirror descent is to instantiate simultaneous updaters $\mathcal{U}$ of the form 
\[\mathcal{U} \colon (\bar{\pi}, \bar{q}) \mapsto \argmax_{\bar{\pi}' \in \Delta(\mathbb{A}_i)} \langle \bar{\pi}', \bar{q} \rangle - \frac{1}{\eta} \text{KL}(\bar{\pi}', \bar{\pi})\]
at each decision point,
where $\bar{\pi}$ is a local policy, $\eta$ is a stepsize, $\bar{q}$ is a local action-value vector. In the case of discrete action spaces, this update reduces to the closed-form learning algorithm
$\mathcal{U}(\bar{\pi}, \bar{q}) \propto \bar{\pi} e^{\eta \bar{q}}$,
known in literature as \emph{hedge} or \emph{multiplicative weights update}.

In the appendix, we show that mirror descent satisfies the following improvement property.

\begin{theorem} \label{thm:md}
Consider a common-payoff game. Let $\pi^t$ be a joint policy having positive probability on every action at every decision point.
Then, if we run mirror descent at every decision point with action-value feedback, for any sufficiently small stepsize $\eta > 0$,
$\mathcal{J}(\pi^{t+1}) \geq \mathcal{J}(\pi^t)$.
Furthermore, this inequality is strict if $\pi^{t+1} \neq \pi^t$ (that is, if $\pi^t$ is not a local optimum).
\end{theorem}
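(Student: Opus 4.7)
The plan is to prove the improvement by a Taylor expansion of $\mathcal{J}$ around $\pi^t$, combining the first-order gain guaranteed by mirror descent at each decision point with a bound on the second-order remainder. Two facts enable this. Because the game has perfect recall, $\mathcal{J}(\pi)$ is a polynomial in the behavioral-strategy entries $\pi(a_i \mid h_i)$ (each entry appearing at most once along any trajectory), so $\mathcal{J}$ is $C^\infty$ on the product of simplexes and all its higher-order derivatives are bounded by some constant $L$ depending only on the game. Moreover, the standard behavioral policy-gradient identity gives $\partial \mathcal{J}(\pi^t) / \partial \pi(a_i \mid h_i) = \mathcal{P}_{\pi^t}(h_i)\, q^{\pi^t}(h_i, a_i)$, and all players share the same $\mathcal{J}$ in the common-payoff case.

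First I would apply Taylor's theorem to write
\[
\mathcal{J}(\pi^{t+1}) - \mathcal{J}(\pi^t) = \sum_{h_i} \mathcal{P}_{\pi^t}(h_i)\,\bigl\langle q^{\pi^t}(h_i),\,\pi^{t+1}(h_i) - \pi^t(h_i)\bigr\rangle + R, \quad |R| \leq \tfrac{L}{2}\|\pi^{t+1} - \pi^t\|_2^2.
\]
Next I would invoke the defining optimality of the MD iterate: since $\pi^{t+1}(h_i)$ maximizes $\langle \pi', q^{\pi^t}(h_i)\rangle - \eta^{-1}\mathrm{KL}(\pi',\pi^t(h_i))$, comparing that objective at $\pi^{t+1}(h_i)$ and at $\pi^t(h_i)$ yields
\[
\bigl\langle q^{\pi^t}(h_i),\,\pi^{t+1}(h_i) - \pi^t(h_i)\bigr\rangle \;\geq\; \tfrac{1}{\eta}\,\mathrm{KL}(\pi^{t+1}(h_i),\pi^t(h_i)) \;\geq\; \tfrac{1}{2\eta}\,\|\pi^{t+1}(h_i) - \pi^t(h_i)\|_1^2,
\]
where the second step is Pinsker's inequality. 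In particular, the first-order sum is non-negative.

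To bound the remainder, I would note that the closed-form hedge update $\pi^{t+1}(h_i) \propto \pi^t(h_i) e^{\eta q^{\pi^t}(h_i)}$ satisfies $\|\pi^{t+1}(h_i) - \pi^t(h_i)\|_1 \leq c\eta$ uniformly in $h_i$ for a constant $c$ depending on a bound on the rewards, so $\|\pi^{t+1} - \pi^t\|_2^2 \leq C\eta^2$ for some constant $C$. Combining,
\[
\mathcal{J}(\pi^{t+1}) - \mathcal{J}(\pi^t) \;\geq\; \tfrac{1}{2\eta}\sum_{h_i} \mathcal{P}_{\pi^t}(h_i)\,\|\pi^{t+1}(h_i) - \pi^t(h_i)\|_1^2 \;-\; \tfrac{LC}{2}\eta^2.
\]
The right-hand side is always $\geq 0$ from the nonnegativity of the first sum, yielding the weak inequality. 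For the strict-inequality case, the first term will be of order $\Theta(\eta)$ at any decision point where the update genuinely moves the policy, whereas the remainder is only $O(\eta^2)$, so the difference is strictly positive once $\eta$ is small enough.

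The main obstacle will be rigorously establishing that $\Theta(\eta)$ lower bound in the strict case. Two subtleties arise. (i) The weight $\mathcal{P}_{\pi^t}(h_i)$ must be positive at every $h_i$ where the update moves; because $\pi^t$ is fully mixed, the only decision points with zero reach are those zeroed out by chance alone, and at such $h_i$ the value of $\pi(h_i)$ does not affect $\mathcal{J}$ at all, so updating them cannot break local optimality. (ii) At an $h_i$ where the MD update moves, the displacement must actually be $\Theta(\eta)$ rather than $o(\eta)$; a first-order Taylor expansion of the hedge map gives $\pi^{t+1}(h_i) - \pi^t(h_i) = \eta \cdot \pi^t(h_i) \circ \bigl(q^{\pi^t}(h_i) - \langle \pi^t(h_i), q^{\pi^t}(h_i)\rangle \mathbf{1}\bigr) + O(\eta^2)$, whose leading term vanishes only when $q^{\pi^t}(h_i)$ is constant on the support of $\pi^t(h_i)$—precisely the case when MD is the identity at $h_i$ and contributes nothing to $\pi^{t+1}\neq\pi^t$.
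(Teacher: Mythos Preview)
Your argument is essentially correct but takes a different route from the paper, and there is one sentence that does not hold as written.

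\textbf{The slip.} The claim ``The right-hand side is always $\geq 0$ from the nonnegativity of the first sum, yielding the weak inequality'' is not justified: nonnegativity of the reach-weighted Pinsker term does not by itself cancel the $-\tfrac{LC}{2}\eta^2$ remainder. This is harmless, however, because the weak inequality is subsumed by the rest of your argument: if $\pi^{t+1}=\pi^t$ then equality is trivial, and if $\pi^{t+1}\neq\pi^t$ then (by your hedge expansion) some reachable $h_i$ has displacement exactly $\Theta(\eta)$, so the first-order term is $\Theta(\eta)$ and dominates the $O(\eta^2)$ remainder. Just restructure the conclusion into these two cases rather than asserting the weak bound separately.

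\textbf{Comparison with the paper.} The paper does not Taylor-expand to second order. It proves a general folklore lemma: for any $f\in C^1$ on a convex compact set and any $1$-strongly-convex Bregman generator, the first-order optimality condition of the mirror-descent step combined with strong convexity yields $\langle\nabla f(x_0),\,x^+-x_0\rangle\le -\tfrac{1}{\eta}\|x^+-x_0\|^2$ and (via Cauchy--Schwarz) $\|x^+-x_0\|\le\eta\|\nabla f(x_0)\|$. Continuity of $\nabla f$ then gives a ball on which $\langle\nabla f(\cdot),\,x^+-x_0\rangle<0$, and the mean-value theorem finishes: $f(x^+)<f(x_0)$ whenever $x^+\neq x_0$ and $\eta$ is small. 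The theorem follows by taking $f=-\mathcal{J}$ on the product of simplices and noting $\mathcal{J}$ is polynomial, hence $C^1$.

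\textbf{What each buys.} The paper's route is shorter and mirror-map-agnostic (any strongly convex $\varphi$, not just KL), and needs only continuity of the gradient---no Hessian bound, no Pinsker, no closed-form hedge expansion. Your route is more quantitative: it yields an explicit lower bound on the improvement in terms of $\eta$, the reach weights, and the per-decision-point displacements, which is useful if one later wants rates. Your route is also more transparent about the fact that the action-value feedback $q^{\pi^t}(h_i)$ is not literally $\nabla_{\pi(h_i)}\mathcal{J}(\pi^t)$ but rather that gradient divided by $\mathcal{P}_{\pi^t}(h_i)$; the paper's reduction (``the players are performing mirror descent steps on $\mathcal{J}$'') absorbs this positive rescaling silently as a per-block stepsize change, whereas you carry the reach weights explicitly through the Taylor expansion.
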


We call \emph{mirror descent search (MDS)} the DTP algorithm obtained by applying Algorithm \ref{alg:global2search} with mirror descent as the last-iterate algorithm. 
By virtue of \Cref{prop:equiv}, as the computational budget increases, MDS inherits the improvement property of Theorem \ref{thm:md}.

\textbf{Magnetic Mirror Descent Search}\quad Another example of a setting that has been historically difficult for DTP is 2p0s games.
In such settings, neither policy iteration- nor mirror descent-based approaches yield useful policies---instead, these approaches tend to cycle, diverge, or even exhibit formally chaotic behavior already in simple games such as rock-paper-scissors \citep{cheung2019vortices}.
It was recently shown that this issue can be empirically and reliably resolved using magnetic mirror descent \citep{mmd}, which is an extension of mirror descent with additional proximal regularization to a magnet that dampens these cycles.
As with mirror descent, in the context of sequential decision making,
a natural means of leveraging magnetic mirror descent is to instantiate simultaneous updaters $\mathcal{U}$ at each decision point:
\[\mathcal{U} \colon (\bar{\pi}, \bar{q}) \mapsto \argmax_{\bar{\pi}' \in \Delta(\mathbb{A}_i)} \langle \bar{\pi}', \bar{q} \rangle - \frac{1}{\eta} \text{KL}(\bar{\pi}', \bar{\pi}) - \alpha  \text{KL}(\bar{\pi}', \bar{\rho}),\]
where $\alpha$ is a regularization temperature and $\bar{\rho} \in \Delta(\mathbb{A}_i)$ is a local reference policy; in the case of discrete action spaces, this update reduces to the following closed form:
$\mathcal{U} (\bar{\pi}, \bar{q}) \propto [\bar{\pi} e^{\eta \bar{q}} \bar{\rho}^{\eta \alpha}]^{\frac{1}{1 + \alpha \eta}}$.

We call \textit{magnetic mirror descent update equivalent search (MMDS)} the DTP algorithm obtained by applying \Cref{alg:global2search} with the particular choices of magnetic mirror descent as the last-iterate algorithm.
Once again from \Cref{prop:equiv}, we establish that if the observation of MMD's reliable last-iterate convergence to equilibrium made by \citet{mmd} holds, then MMDS leads to expected improvement in 2p0s games as the computational budget grows.

\begin{figure*}[h] 
    \centering
    \includegraphics[width=.95\linewidth]{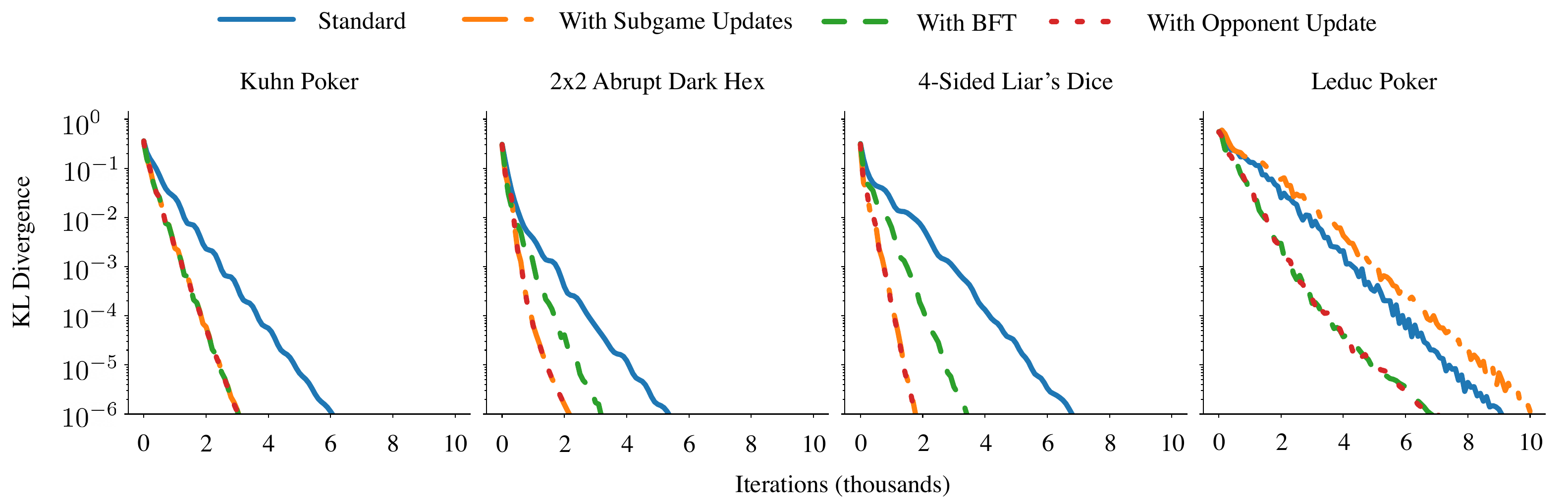}
    \caption{Divergence to AQRE as a function of iterations for last-iterate algorithm analogues of variants of MMD-based search algorithms. Each variant exhibits empirical convergence.}
    \label{fig:qre}
    \end{figure*}

\subsection{Beyond Action-Value-Based Planners} \label{sec:beyond}
Many DTP algorithms cannot be derived from Algorithm \ref{alg:global2search} that nevertheless may be useful to view from the perspective of update equivalence.
One class of such approaches makes updates at future decision points during search.
Approaches of this form, such as Monte Carlo tree search (MCTS) \citep{Coulom06,kocsis,mcts}, played an important role in successes in perfect information games \cite{SilverHuangEtAl16nature,az,schrittwieser2019mastering,antonoglou2022planning}.
Another class of approaches fine-tune the belief model \citep{bft} to track the search policy.

We provide empirical evidence for the soundness of three such MMD-based approaches in the context of imperfect-information 2p0s games:
1) With \emph{subgame updates}, A variant in which the planning agent also performs MMD updates at its own future decision points; 2) With \emph{belief fine-tuning (BFT)}, a variant that implements an MMD update on top of a posterior fine-tuned to the search policy; 3) With \emph{opponent updates}, a variant that implements an MMD update assuming a subgame joint policy in which the opponent has performed an MMD update at its next decision point.
See \Cref{app:beyond} for further algorithmic details.

In Figure \ref{fig:qre}, we measure the divergence of these last-iterate algorithm counterparts to agent quantal response equilibrium (AQRE) \citep{aqre} as a function of the number of update iterations.
We find that all three last-iterate algorithm analogues exhibit empirical convergence, suggesting that these DTP algorithms may be safe in 2p0s settings.
More generally, these findings may hint that, so long as MMD is used as a local updater, one may have wide leeway in designing safe DTP algorithms for 2p0s games.

\section{Experiments}

In this section, we add further evidence for the update-equivalence framework's utility by showing that the novel DTP algorithms derived from it also perform well in practice.
We focus on two settings with imperfect information: i) two variants of Hanabi \citep{BARD2020103216}, a fully cooperative card game in which PBS-based DTP approaches are considered state-of-the-art; and ii) 3x3 Abrupt Dark Hex and Phantom Tic-Tac-Toe, 2p0s games with virtually no public information.

\subsection{Hanabi} We consider two versions of two-player Hanabi, a standard benchmark in the literature \citep{BARD2020103216}.
Both versions are scored out of 25 points---scoring a 25 is considered winning.
First, we trained instances of PPO \citep{ppo} in self play for each setting.
We intentionally selected instances whose final performance roughly matched those of R2D2 \citep{r2d2} instances that have been used to benchmark DTP algorithms.
Next, we trained a belief model for each PPO policy using Seq2Seq with the same setup and hyperparameters as \citet{lbs}; in this setup, the belief model takes in the decision point of one player and predicts the private information of the other player, conditioned on the PPO policy having been played thus far.

We adapted our implementation of MDS from that of single-agent SPARTA with a learned belief model \citep{lbs}.
Our adaptation involves three important changes.
First, MDS performs search for all agents, rather than only one agent, as was done in \citep{lbs}.
Second, MDS plays the argmax\footnote{Note that this differs slightly from our description in the previous section in that we are playing the argmax, rather than sampling, from the updated policy.} of a mirror descent update, rather than the argmax of the empirical Q-values from search, as SPARTA does.
Third, MDS always plays its search policy.
This contrasts both SPARTA \citep{lerer2020improving,lbs} and RLSearch \citep{fickinger2021scalable}, which, after doing search, perform a validation check to determine whether the search policy outperforms the blueprint policy; if the validation check fails, the blueprint policy is played instead of the search policy.
This validation check can be expensive and usually fails (SPARTA and RLSearch only play their search policies on a handful of turns).
Thus, MDS's ability to sidestep the validation check is a significant advantage. 

\textbf{5-Card Hanabi}\quad 
We first test MDS in the 5-card 8-hint variant of Hanabi, which is commonly played by humans.
Specifically, we compare MDS using a PPO blueprint policy against single- and multi-agent SPARTA and RLSearch, which are considered state-of-the-art, with exact belief models and an R2D2 blueprint policy; we also compare against single-agent SPARTA using the same PPO blueprint policy and Seq2Seq belief model.
For MDS, we use $10,000$ samples from the belief model for search; this takes about 2 seconds per decision, which is about two orders of magnitude faster than multi-agent SPARTA and multi-agent RL-Search.

\looseness=-1
We report our results in Table \ref{tab:5c8h}.
To give context to the results, the strongest reported performance of model-free agents in literature exceeds 24.40, while that of the strongest reported search agents exceeds 24.60 \citep{lbs}. 
It becomes increasingly hard to improve the performance as the score increases; for high-scoring policies, increases as small as a few hundredths of points are considered substantial and can translate to multiple percentage points in winning percentage.
To give specific numbers as an illustration, \citet{lerer2020improving} report that improving the expected return from 24.53 to 24.61 can increase the winning percentage from 71.1\% to 75.5\%.

Despite being handicapped by 1) an approximate belief model, 2) a lack of validation, and 3) two orders of magnitude less search time, we find that MDS achieves superior or comparable performance to PBS methods.
Given the historical dominance of PBS approaches in settings where they are applicable, the fact that a non-PBS-based method could achieve competitive performance is already notable; that MDS does so with such a severe handicap is particularly impressive.

\textbf{7-Card Hanabi}\quad We next investigate MDS in the 7-card 4-hint variant of Hanabi.
This variant was introduced by \citet{lbs} to illustrate the challenges of performing search as the amount of private information scales.
Specifically, the number of possible histories is too large to enumerate.
As a result, multi-agent SPARTA is inapplicable altogether, single-agent SPARTA and RLSearch require an approximate belief model, and multi-agent RLSearch requires both an approximate belief model and belief fine-tuning \citep{bft}.
Because of the significantly increased burden of running PBS methods in this setting, one would hope that a method that eschews PBSs might be able to outperform PBS methods.
And, indeed, in Table \ref{tab:7c4h}, where we report expected return and standard error over 2000 games, we find that MDS does outperform multi-agent RLSearch with BFT, as well as a variety of single-agent search techniques.
These results support our claim that the update-equivalence framework offers superior scaling properties in the amount of non-public information, compared to PBS methods.

\begin{table}[t!]
\centering
\caption{MDS (bold) compared to SPARTA and RLSearch in 5-card 8-hint Hanabi. Despite using approximate beliefs and roughly two orders of magnitude less search time, MDS matches the performance of prior methods using exact beliefs.
Columns to the left of the divider used an R2D2 blueprint that scored $24.23 \pm 0.04$; columns to the right used a PPO blueprint that scored $24.24 \pm 0.02$.
}  \vspace{-3mm}
\label{tab:5c8h}
\scalebox{.95}{\begin{tikzpicture}
\node at (0, 0) {
    \setlength{\tabcolsep}{2.7mm}
    \begin{tabular}{@{}l || rrrr | r >{\bfseries}r@{}} 
     \toprule
    \makecell[l]{Search \\ Agents planned \\ Belief model \\ Validation used \\ Time per move} & \makecell[r]{SPARTA \\ Single \\ Exact \\ \checkmark \\ 3s} & \makecell[r]{RLSearch \\ Single\\ Exact \\ \checkmark \\ 35s} &
     \makecell[r]{SPARTA \\ Multi\\ Exact \\ \checkmark \\ 450s} &  \makecell[r]{RLSearch \\ Multi\\ Exact \\ \checkmark \\ 180s}  &  \makecell[r]{ SPARTA \\ Single \\ Seq2Seq \\ \checkmark \\ 1s} & \makecell[r]{ MDS \\ Multi \\ Seq2Seq \\ \ding{55} \\ 2s}\\
     \midrule
     \makecell[l]{
     Expected return \\
     Standard error}  &
     \makecell[r]{$24.57 $ \\  $\pm~ 0.03$} &
     \makecell[r]{$24.59 $ \\ $\pm~ 0.02$} &
     \makecell[r]{$24.61 $ \\  $\pm~ 0.02$} &
     \makecell[r]{$24.62 $ \\ $\pm~ 0.03$} &
     \makecell[r]{$24.52$ \\ $\pm~ 0.02$} & 
     \makecell[r]{$\mathbf{24.62}$ \\ $\mathbf{\pm~ 0.02}$}\\
     
     \bottomrule
     \end{tabular}
 };
 \end{tikzpicture}}
\end{table}

\begin{table}[t!]
\centering
\caption{MDS (bold) compared to SPARTA and RLSearch in 7-card 4-hint Hanabi; MDS compares favorably to these approaches. Columns to the left of the divider used an R2D2 blueprint that scored $23.67 \pm 0.02$; columns to the right used a PPO blueprint that scored $23.66 \pm 0.03$.} \vspace{-3mm}
\label{tab:7c4h}
\scalebox{.95}{\begin{tikzpicture}
\node at (0, 0) {
    \setlength{\tabcolsep}{4.3mm}
    \begin{tabular}{@{}l || rrr | r >{\bfseries}r@{}} 
     \toprule
    \makecell[l]{Search \\ Agents planned \\ Belief model \\ Validation used \\ Time per move}  & \makecell[r]{RLSearch \\ Single \\ Seq2Seq \\ $\checkmark$ \\ 35s} & \makecell[r]{RLSearch \\ Single\\ BFT \\ $\checkmark$ \\ 100s} &
     \makecell[r]{RLSearch \\ Multi\\ BFT \\ $\checkmark$ \\ 310s} &  \makecell[r]{ SPARTA \\ Single \\ Seq2Seq \\ $\checkmark$ \\ 1s} & \makecell[r]{ MDS \\ Multi \\ Seq2Seq \\ \ding{55} \\ 2s}\\
     \midrule
     \makecell[l]{Expected Return \\ Standard error} &
     \makecell[r]{$24.14 $ \\ $\pm~ 0.04$} &
     \makecell[r]{$24.18 $ \\ $\pm~ 0.03$} &
     \makecell[r]{$24.18$ \\ $\pm~ 0.03$} &
     \makecell[r]{$24.17$ \\ $\pm~ 0.03$} &
     \makecell[r]{$\mathbf{24.28}$ \\ $\mathbf{\pm~ 0.02}$}\\
     \bottomrule
     \end{tabular}
 };
 \end{tikzpicture}}
 \vspace{-4mm}
\end{table}

\begin{wrapfigure}{r}{0.5\textwidth}
\vspace{-6mm}
\centering    \includegraphics[width=\linewidth]{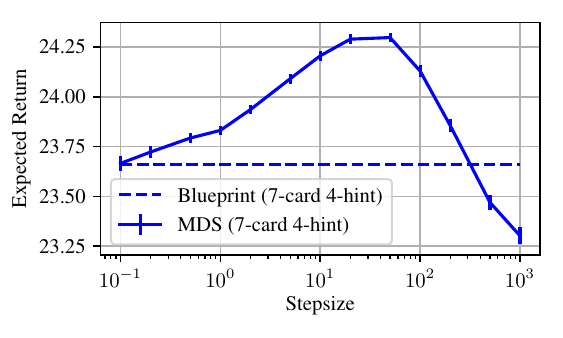}    \caption{Performance of MDS as a function of stepsize in 7-card 4-hint Hanabi. Small step sizes provide less improvement over the blueprint; overly large step sizes cause the search policy to diverge too far from the blueprint, resulting in less improvement or even detriment.}
\label{fig:md_ues}
\vspace{-6mm}
\end{wrapfigure}

To offer further intuition for the behavior of MDS, we show the performance in 7-card 4-hint Hanabi as a function of mirror descent's stepsize in Figure \ref{fig:md_ues}.
As might be expected, we find that improvement is a unimodal function of stepsize: 
If the stepsize is too small, opportunities to improve the blueprint policy are neglected; on the other hand, if the stepsize is too large, the search policy's posterior diverges too far from that of the blueprint for the search to provide useful feedback.

\subsection{3x3 Abrupt Dark Hex and Phantom Tic-Tac-Toe} 

Unlike common-payoff games, measuring performance in 2p0s games is difficult because the usual metric of interest---exploitability (\textit{i.e.}, the expected return of a best responder)---cannot be cheaply estimated.
Indeed, providing a reasonable lower bound requires training an approximate best response \citep{approx_br}, which may be costly.
To facilitate this approximate best response training under a modest computation budget, we focus our experiments on a blueprint policy that selects actions uniformly at random at every decision point so that rollouts may be performed quickly;
furthermore, rather than using a learned belief model, we track an approximate posterior using particle filtering \citep{particle-filtering} with 10 particles.
If particles remain, our implementation of MMDS performs a rollout for each particle for each legal action until the end of the game and performs an MMD update on top of the empirical means of the returns.
Otherwise, the agent executes the blueprint policy.

We empirically investigate MMDS in 3x3 Abrupt Dark Hex and Phantom Tic-Tac-Toe, two standard benchmarks available in OpenSpiel \citep{openspiel}.
We show approximate exploitability results in Table \ref{tab:expl_app}.
We computed these results using OpenSpiel's \citep{openspiel} DQN \citep{mnih2015humanlevel} best response code, trained for 10 million time steps.
We show standard error over 5 DQN best response training seeds and 2000 final evaluation games for the fully trained model.
We compare against a bot that plays the first legal action, the uniform random blueprint, \citet{rllib}'s implementation of independent PPO \citep{ppo}, \citet{openspiel}'s implementation of NFSP \citep{nfsp}, and MMD \citep{mmd}.
For the learning agents, we ran 5 seeds and include checkpoints trained for both 1 and 10 million time steps.

\begin{wraptable}{r}{0.47\textwidth} 
\vspace{-6mm}
\begin{center}
\caption{Approximate exploitability in 3x3 Abrupt Dark Hex and Phantom Tic-Tac-Toe, on a 0 to 100 scale. 
MMDS (bold) substantially reduces Random's exploitability.}
\label{tab:expl_app}
\scalebox{.86}{\begin{tabular}{l r r} 
Agent & 3x3 Ab. DH & Phantom TTT\\
\toprule
1st Legal Action & $100 \pm 0$ & $100 \pm 0$\\
Random & $74 \pm 1$ & $78 \pm 0$\\
PPO(1M steps) & $85 \pm 6$ & $89 \pm 6$\\
PPO(10M steps) & $100 \pm 0$ & $90 \pm 4$\\
NFSP(1M steps) & $91 \pm 4$ & $95 \pm 1$\\
NFSP(10M steps) & $59 \pm 1$ & $78 \pm 5$\\
\textbf{Random{+}MMDS} & $\mathbf{50 \pm 1}$ & $\mathbf{50 \pm 1}$\\
MMD(1M steps) & $34 \pm 2$ & $37 \pm 1$\\
MMD(10M steps) & $20 \pm 1$ & $15 \pm 1$\\
\bottomrule
\end{tabular}}
\end{center}
\vspace{-7mm}
\end{wraptable}

We find that MMDS reduces the approximate exploitability of the uniform random blueprint by more than a third, despite using a meager 10-particle approximate posterior.
Furthermore, MMDS achieves lower approximate exploitability than all non-MMD-based approaches.

We also investigate the performance of MMDS in head-to-head matchups.
We show results using the uniform random blueprint with 10 particles and also a MMD(1M) blueprint with 100 particles in Figure \ref{fig:mmd_ues}.
The values shown are averages over 10,000 games with bootstrap estimates of 95\% confidence intervals.
We find that MMDS tends to improve the performance of the blueprint policies in both cases, despite the short length of the games.

\begin{figure}
    \centering
    \subfloat[\centering Random with MMDS]{{\includegraphics[width=.47\linewidth]{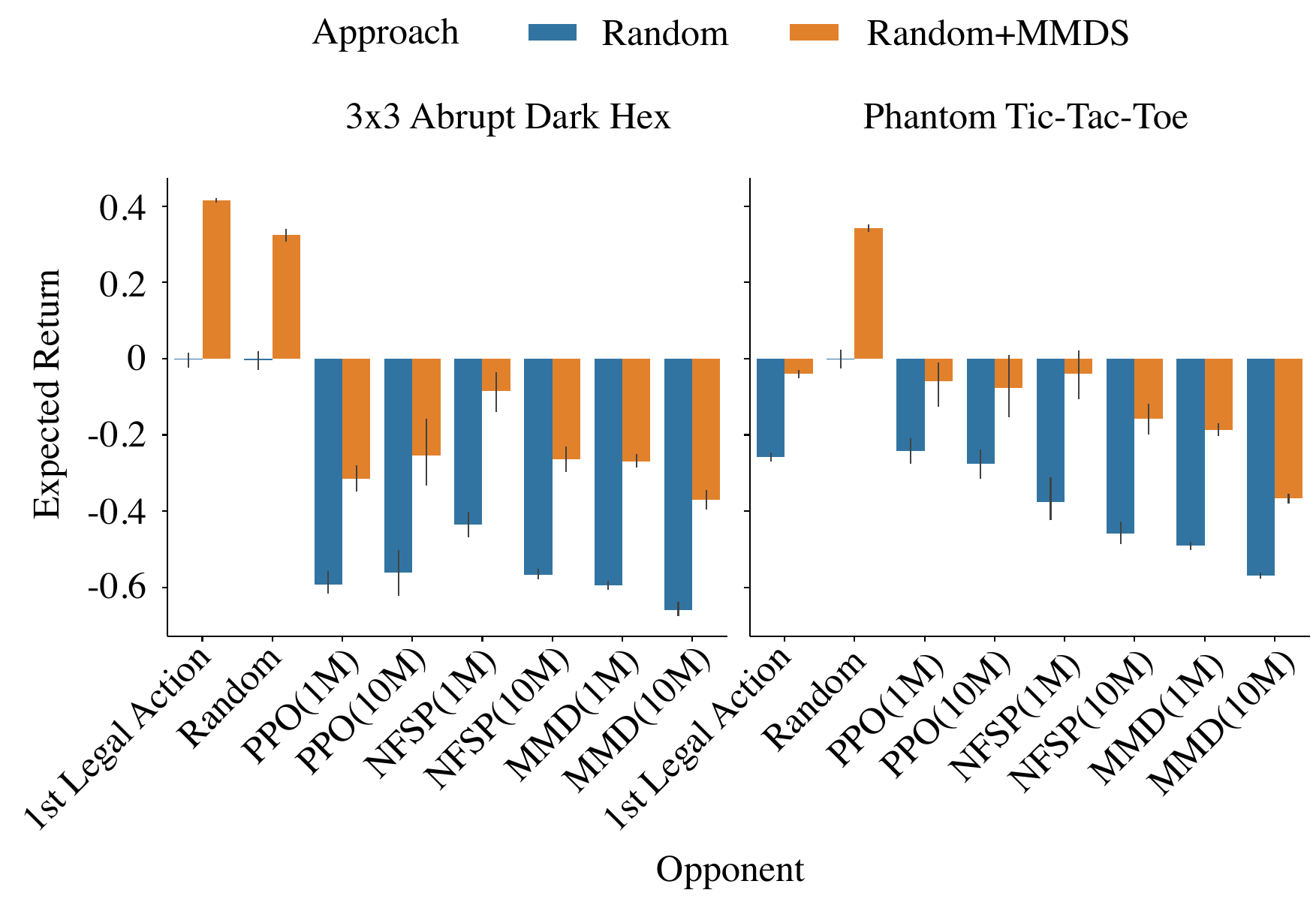}}}
    \qquad
    \subfloat[\centering MMD(1M) with MMDS]{{\includegraphics[width=.47\linewidth]{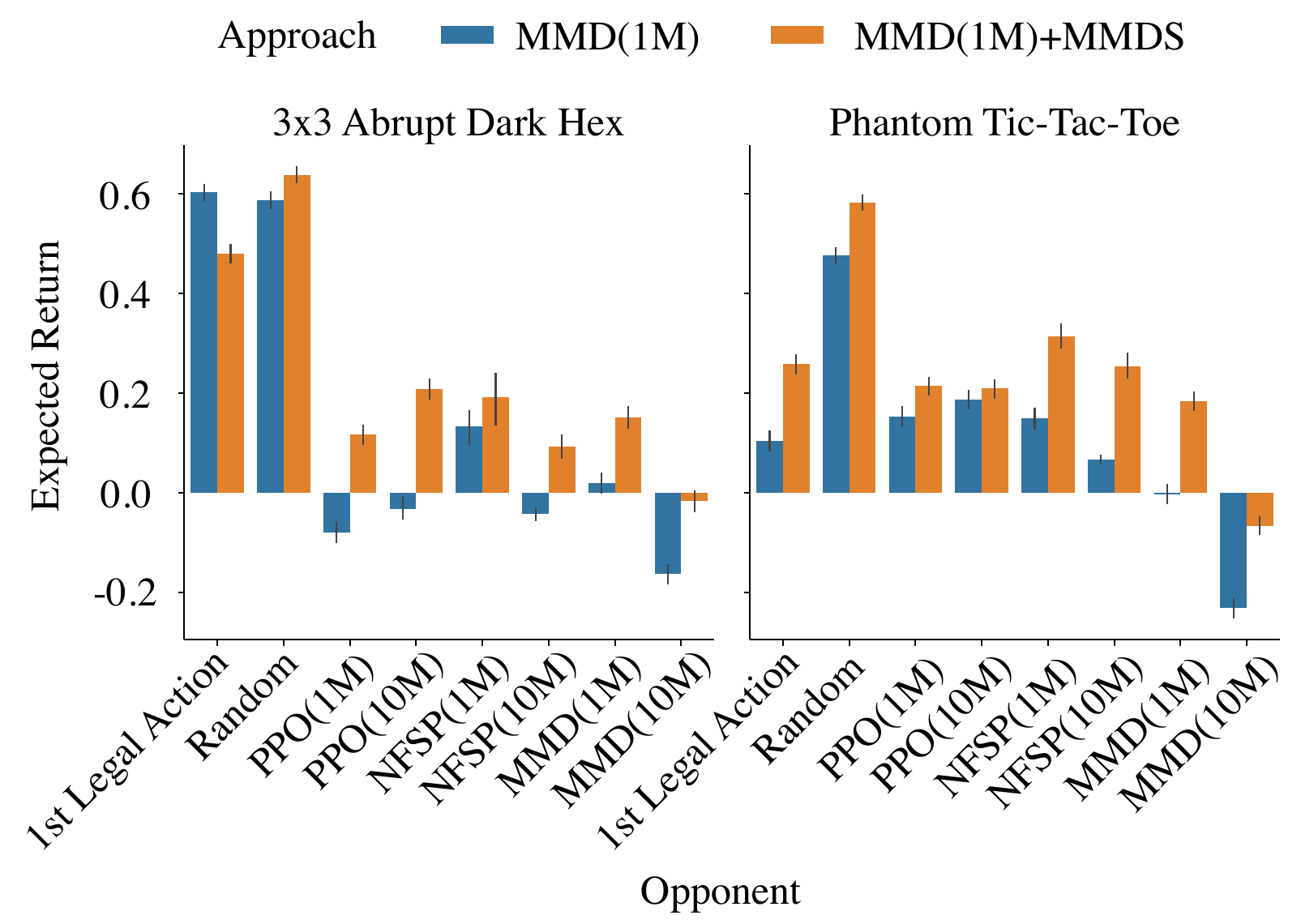}}}
    \caption{Expected return of uniform random and uniform random blueprint + MMDS (left) and MMD(1M) and MMD(1M) blueprint + MMDS (right) versus various opponents in 3x3 Abrupt Dark Hex and Phantom Tic-Tac-Toe. MMDS tends to improve head-to-head expected return.}
    \label{fig:mmd_ues}
\end{figure}

\section{Related Work}

\textbf{Other Approaches to Imperfect Information}\quad Motivated by the deficiencies of PBS-based DTP described above, a small group of existing works has advocated for alternative approaches to DTP for common-payoff games~\citep{jps} and 2p0s games~\citep{Zhang2021SubgameSW}.
\citet{jps}'s approach relies on a result that shows that it is possible to decompose the change in expected return for two different joint policies across decision points.
By leveraging this result, \citet{jps} introduce a search procedure called joint policy search (JPS) that is guaranteed to not decrease the expected return.
\citet{Zhang2021SubgameSW}'s approach is based on the insight that, in practice, it is effective to consider a subgame that excludes most of the decision points supported by the PBS.
Specifically, \citet{Zhang2021SubgameSW} advocate in favor of solving a maxmargin subgame \citep{maxmargin} that includes the planning agent's true decision point, as well as any opponent decision points that are possible from the perspective of the planning agent.
Despite proving the existence of games in which this approach, which they call 1-KLSS, increases the exploitability of the policy, \citet{Zhang2021SubgameSW} find experimentally that, on small and medium-sized games, 1-KLSS reliably decreases exploitability.
In concurrent work, \citet{liu23} derive an alternative variant of 1-KLSS that guarantees safety, making it a promising approach for scaling search to adversarial settings with large amounts of non-public information.

\textbf{Structurally Similar Approaches}\quad While the motivation for the update-equivalence framework most closely resembles of the works of \citet{jps} and \citet{Zhang2021SubgameSW}, natural instances of the update-equivalence framework are structurally more similar to search algorithms unrelated to resolving issues with PBS-based planning.
As discussed previously, arguably the most fundamental instance of the framework of update equivalence is Monte Carlo search (MCS) \citep{mcs}, which is update equivalent to policy iteration, as was articulated by \citet{mcs} themselves: ``[MCS] basically implements a single step of policy iteration.''
More recently, \citet{pgs,exit-thesis,hamrick2021on,lerer2020improving,lbs} investigate MCS in a variety of settings:
\citet{pgs} and \citet{exit-thesis} find that it yields perhaps surprisingly good performance relative to MCTS in Hex;
\citet{hamrick2021on} report comparable performance with MCTS across a variety of settings, with the exception of Go;
\citet{lerer2020improving} and \citet{lbs} show strong results for Hanabi (under the names single-agent SPARTA and learned belief search).

In his work, \citet{exit-thesis} also investigates a search algorithm called policy gradient search that performs policy gradient updates at future decision points.
He finds that a variant of this approach that involves regularization toward the blueprint policy tends to outperform not regularizing.
This approach is similar to some of the variants of MMDS investigated in Section \ref{sec:beyond}.
The combination of policy gradient search and the framework of update equivalence may be a fruitful direction in the pursuit of algorithms that perform well in both perfect and imperfect information games.

Separately from \citet{exit-thesis}, \citet{pikl} also investigate approaches to DTP involving regularization toward a blueprint.
\citet{pikl} show empirically that, by performing regularized search, it is possible to increase the expected return of an imitation learned policy without a loss of prediction accuracy (for the policy being imitated).
The most immediately related experiments to this work are those concerning Hanabi, in which \citet{pikl} investigated MDS (under the name piKL SPARTA) applied on top of imitation learned blueprint policies.
\citet{pikl} note that this approach reliably increases the performance of weak policies, but neither recognize that it possesses an improvement guarantee nor that it is simply performing a hedge update, as we do in this work.
\citet{pikl}'s experiments on Diplomacy~\citep{paquette2019no} are also related in that their approach is similar to a follow-the-regularized-leader analogue of MMDS.
This approach played an important role in recent empirical successes for
Diplomacy~\citep{diplodocus,cicero}, suggesting that the framework of update equivalence is a natural approach to general-sum settings.

\section{Conclusion and Future Work}

In this work, motivated by the deficiencies of PBS-based search, we advocate for a new paradigm for decision-time planning, which we call the framework of update equivalence.
We show how the framework of update equivalence can be used to generate and ground new decision-time planning algorithms for imperfect-information games.
Furthermore, we show that these algorithms can achieve superior performance compared with state-of-the-art PBS-based methods in Hanabi and can reduce approximate exploitability in 3x3 Abrupt Dark Hex and Phantom Tic-Tac-Toe.

We believe the framework of update equivalence opens the door to many exciting possibilities for DTP and expert iteration \citep{exit,exit-thesis} in settings with large amounts of imperfect information.
Among these are the prospects of extending algorithms resembling AlphaZero \citep{az}, Stochastic MuZero \citep{antonoglou2022planning}, and Diplodocus \citep{diplodocus} to imperfect-information games.

\section{Acknowledgements}

We thank Brian Hu Zhang for helpful discussions regarding knowledge-limited subgame solving~\citep{Zhang2021SubgameSW} and Eugene Vinitsky for helpful feedback regarding the presentation of the work.

\bibliography{iclr2024_conference}
\bibliographystyle{abbrvnat}

\newpage
\appendix
\onecolumn

\section{Theory}

In this section, we prove Theorem \ref{thm:md}.
To start, consider the following more general lemma.

\begin{lemma}[Folklore] \label{lem:md}
    Let $f \in \mathcal{C}^1(\mathcal{X})$, where $\mathcal{X}\subseteq\mathbb{R}^d$ is convex and compact, $x_0 \in \mathcal{X}$, and $x^+$ be the solution to the mirror descent step
    \[
        x^+ \coloneqq \arg\min_{x \in \mathcal{X}} \left\{ \eta \langle \nabla f(x_0), x\rangle + D_\varphi(x, x_0) \right\}.
    \]
    where $\varphi$ is differentiable and $1$-strongly convex with respect to a norm $\|\,\cdot\,\|$.
    Then, for $\eta$ small enough, if $x^+ \neq x_0$,
    \[
        f(x^+) < f(x_0),
    \]
\end{lemma}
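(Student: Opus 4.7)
The plan is to combine the first-order optimality of the mirror step with a first-order Taylor expansion of $f$ around $x_0$, leveraging the compactness of $\mathcal{X}$ to turn continuity of $\nabla f$ into uniform continuity.

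Setting $x = x_0$ in the inequality satisfied by the minimizer $x^+$ of the strictly convex objective $h_\eta(x) := \eta\langle \nabla f(x_0), x\rangle + D_\varphi(x, x_0)$ yields $h_\eta(x^+) \leq h_\eta(x_0) = 0$. By the $1$-strong convexity of $\varphi$ we have $D_\varphi(x^+, x_0) \geq \tfrac{1}{2}\|x^+ - x_0\|^2$, so the inequality rearranges to the descent estimate
\[
\langle \nabla f(x_0),\, x^+ - x_0 \rangle \;\leq\; -\frac{1}{2\eta}\|x^+ - x_0\|^2.
\]
Because $f \in \mathcal{C}^1(\mathcal{X})$ and $\mathcal{X}$ is compact, $\nabla f$ is uniformly continuous; letting $\omega$ denote its modulus of continuity with respect to the dual norm, the fundamental theorem of calculus gives
\[
f(x^+) - f(x_0) \;\leq\; \langle \nabla f(x_0),\, x^+ - x_0 \rangle + \omega(\|x^+ - x_0\|)\,\|x^+ - x_0\|.
\]
Chaining the two inequalities yields
\[
f(x^+) - f(x_0) \;\leq\; \|x^+ - x_0\|\,\Bigl(\omega(\|x^+ - x_0\|) - \tfrac{1}{2\eta}\|x^+ - x_0\|\Bigr),
\]
so it suffices to argue that the parenthesized expression is strictly negative whenever $x^+ \neq x_0$.

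Cauchy--Schwarz applied to the descent estimate already gives $\|x^+ - x_0\| \leq 2\eta\|\nabla f(x_0)\|_*$, so $\|x^+ - x_0\| \to 0$ and $\omega(\|x^+ - x_0\|) \to 0$ as $\eta \to 0$. The remaining, and main, technical point is to rule out the degenerate regime in which $\|x^+ - x_0\|/\eta$ itself vanishes. Since $x^+ \neq x_0$ forces $x_0$ to be non-stationary for $f$ on $\mathcal{X}$ (via the KKT conditions for $h_\eta$ at $x_0$, noting $\nabla_x D_\varphi(x, x_0)|_{x = x_0} = 0$), there exists a feasible descent direction $d$ in the tangent cone $T_{\mathcal{X}}(x_0)$ with $\langle \nabla f(x_0),\, d\rangle < 0$. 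The plan is to probe $h_\eta$ along the ray $x_0 + sd$ for small $s$, bounding $D_\varphi(x_0+sd, x_0)$ from above via $\nabla_x D_\varphi(x_0, x_0) = 0$ together with continuity of $\nabla\varphi$; a choice of $s$ that scales linearly with $\eta$ then gives $h_\eta(x^+) \leq h_\eta(x_0 + sd)$ of order $-\eta^2$. Combining this upper bound on $h_\eta(x^+)$ with $D_\varphi(x^+, x_0) \geq 0$ and Cauchy--Schwarz produces a matching linear lower bound $\|x^+ - x_0\| \geq c_0\,\eta$ for some $c_0 > 0$ depending on $x_0$.

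Once this linear lower bound is in hand, the parenthesized quantity in the final display is at most $\omega(\|x^+ - x_0\|) - c_0/2$, which is strictly negative for all sufficiently small $\eta$. The hardest step is the matching linear lower bound on $\|x^+ - x_0\|$: the descent estimate and the Taylor bound are both of order $\|x^+ - x_0\|^2/\eta$ and $\omega(\|x^+-x_0\|)\|x^+-x_0\|$ respectively, so the delicate point is certifying that the former dominates, which requires extracting a quantitative comparison between the size of the mirror step and the stepsize. Everything else is the standard three-point mirror-descent manipulation plus a first-order Taylor expansion.
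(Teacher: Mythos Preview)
Your core machinery matches the paper's almost exactly: both derive the descent inequality $\langle \nabla f(x_0), x^+ - x_0\rangle \le -\tfrac{c}{\eta}\|x^+-x_0\|^2$ from optimality of the mirror step (the paper via the KKT inequality, you via the value comparison $h_\eta(x^+)\le h_\eta(x_0)$, hence the extra factor $\tfrac12$), both use Cauchy--Schwarz to get $\|x^+-x_0\|=O(\eta)$, and both finish by controlling the first-order Taylor remainder through continuity of $\nabla f$ (mean value theorem in the paper, modulus of continuity in your write-up). Where you go further is in explicitly flagging the need for a \emph{lower} bound $\|x^+-x_0\|\ge c_0\eta$; the paper simply asserts ``by continuity there exists $\epsilon$ such that $\langle\nabla f(x),x^+-x_0\rangle<0$ on $B(x_0,\epsilon)$'' and then requires $\eta\|\nabla f(x_0)\|<\epsilon$, but since $\epsilon$ was chosen after $x^+(\eta)$, this is circular exactly in the regime you worry about, namely $\|x^+-x_0\|/\eta\to 0$.

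Your attempt to close this gap, however, has its own hole. From $\nabla_x D_\varphi(x_0,x_0)=0$ and mere continuity of $\nabla\varphi$ you only obtain $D_\varphi(x_0+sd,x_0)=o(s)$, not $O(s^2)$. Hence with $s$ linear in $\eta$ you get $h_\eta(x_0+sd)-h_\eta(x_0)=-c\eta^2+o(\eta)$, and the $o(\eta)$ term swamps the $-c\eta^2$ you need; the claimed ``order $-\eta^2$'' bound and the resulting linear lower bound on $\|x^+-x_0\|$ do not follow. To make your probing argument go through you need an upper bound $D_\varphi(x_0+sd,x_0)\le \tfrac{L}{2}s^2\|d\|^2$, i.e.\ local Lipschitzness of $\nabla\varphi$ near $x_0$, which is stronger than the lemma's hypotheses (though it does hold in the paper's intended application, where $\varphi$ is negative entropy and $x_0$ is in the relative interior of the simplex). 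Without that extra smoothness, neither your argument nor the paper's closes the circularity.
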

\begin{proof}
    From the first-order necessary optimality conditions for the mirror descent step, we have
    \[
        \langle \eta \nabla f(x_0) + \nabla \varphi(x^+) - \nabla \varphi(x_0), \hat x - x^+\rangle \geq 0 \quad \forall\, x \in \mathcal{X}.
    \]
    Hence, letting $\hat x \coloneqq x_0$, and using the strong convexity of $\varphi$ and the assumption that $x^+ \neq x_0$, we obtain
    \[
        \langle \nabla f(x_0), x^+ - x_0\rangle \le -\frac{1}{\eta} \|x^+ - x_0\|^2 < 0,
    \]
    and with a further application of the Cauchy-Schwarz inequality,
    \[
        \|x^+ - x_0\| \le \eta \|\nabla f(x_0)\|.
    \]
    By continuity of the gradient of $f$, there must exist $\epsilon > 0$ such that
    \[
        \langle \nabla f(x), x^+ - x_0\rangle < 0 \quad\forall\, x \in B(x_0, \epsilon).
    \]
    Furthermore, the mean value theorem guarantees that
    \[
        f(x^+) - f(x_0) = \langle \nabla f(\xi), x^+ - x_0\rangle
    \]
    for some $\xi$ on the line connecting $x_0$ to $x^+$. So, as long as $\eta \|\nabla f(x_0)\| < \epsilon$, we have
    \[
        f(x^+) < f(x_0),
    \]
    as we wanted to show.
\end{proof}

\textbf{Theorem \ref{thm:md}} Consider a common-payoff game. 
Let $\pi^t$ be a joint policy having positive probability on every action at each decision point. 
Then, if we run mirror descent at every decision point with action-value feedback, for any
sufficiently small stepsize $\eta$,
\[\mathcal{J}(\pi^{t_1}) \geq \mathcal{J}(\pi^t).\]
Furthermore, this inequality is strict if $\pi^{t_1} \neq \pi^t$ (that is, if $\pi^t$ is not a local optimum).

\begin{proof}
The space of joint policies $\Pi$ is a Cartesian product of probability simplices, immediately implying that it is convex and compact.
Furthermore, the expected return function is a polynomial function of any joint policy $\pi \in \Pi$.
Hence, $\mathcal{J} \in \mathcal{C}^1(\Pi)$.
The result then follows directly from \Cref{lem:md}, as, by hypothesis, the players are performing mirror descent steps on $\mathcal{J}$.
\end{proof}

\section{Experiments}

In this section, we provide further details about some of our empirical results and also show some additional results.

\subsection{Beyond Action-Value-Based Planners} \label{app:beyond}

First, we describe in greater detail the DTP algorithms that we investigated in Section \ref{sec:beyond}.

\paragraph{With Subgame Updates}

With subgame updates differs from MMDS in the feedback it uses.
In particular, rather than using the action values for the current policy, with subgame updates uses the action-values for the joint policy induced by performing MMD updates at its own future decision points (but leaving the opponent's policy fixed).
Because the opponent is fixed, in tabular settings, we can compute the feedback for the last-iterate algorithm analogue of this approach using one backward induction pass for each player.

\paragraph{With Belief Fine-Tuning} 

With BFT differs from MMDS in the distribution it samples histories from.
In particular, rather than sampling from the distribution induced by the current policy, it samples from the distribution induced by the search policy for each player.

\paragraph{With Opponent Update}

With opponent update differs from MMDS in the feedback it uses.
In particular, rather than using the action values for the current policy, with opponent update uses the action values for the joint policy induced by performing an MMD update for the opponent at the next time step.
Note that, as a DTP algorithm, this approach would involve two belief model sampling steps: one to sample an opponent decision point for updating and one to sample an unbiased history for that opponent information state.

\paragraph{Agent Quantal Response Equilibria Solving}

In Figure \ref{fig:qre_app}, we show the convergence results from the main body, along with the exploitabilities of the corresponding iterates.
For these experiments, we used $\alpha=0.1$ and $\eta=\alpha/10$, except for with opponent updates on Leduc, where we used $\eta = \alpha / 20$, and with subgame updates on Leduc, where we used $\eta = \alpha / 50$.

\begin{figure}[H] 
\includegraphics[width=\linewidth]{figures/qre.pdf}
\includegraphics[width=\linewidth]{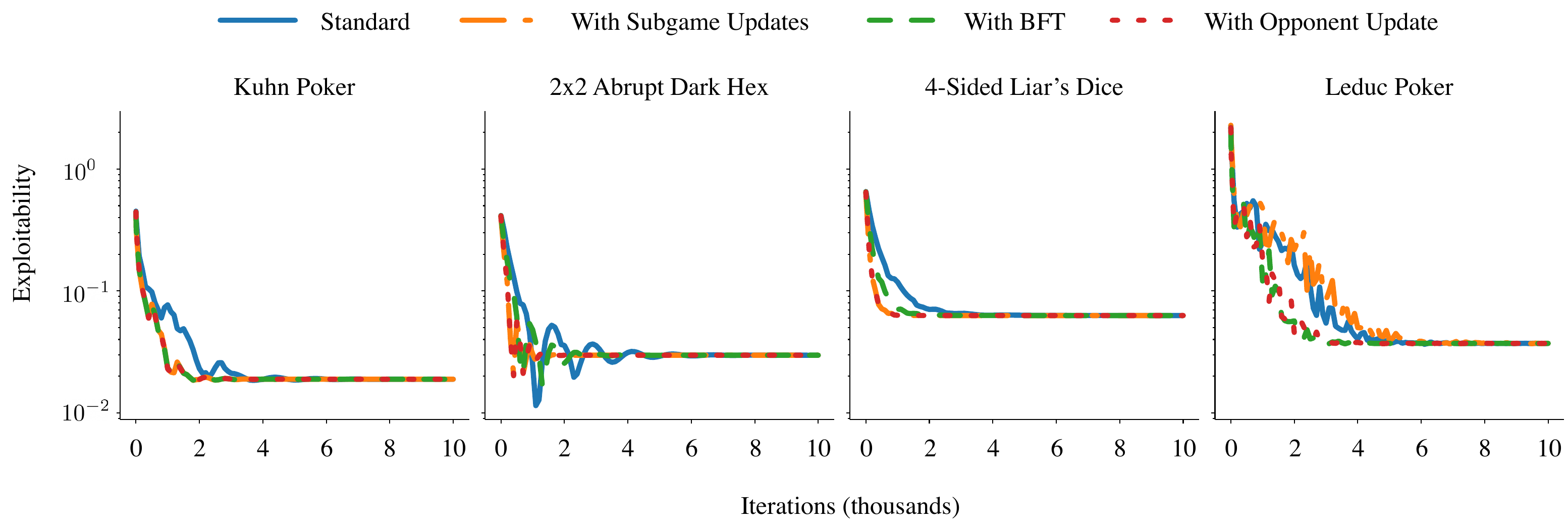}
\caption{Solving for agent quantal response equilibria using last-iterate algorithm analogues of variants of MMDS.}
\label{fig:qre_app}
\end{figure}

\paragraph{MiniMaxEnt Equilibrium Solving} In Figure \ref{fig:qre_mme}, we show convergence results for solving for MiniMaxEnt equilibria, which are the solutions of MiniMaxEnt objectives \citep{fforel}.
A MiniMaxEnt objective is an objective of the form
\[\mathcal{J}_i \colon \pi \mapsto \mathbb{E} \left[\sum_t \mathcal{R}_i(S^t, A^t)  + \alpha \mathcal{H}(\pi_i(H_i^t)) - \alpha \mathcal{H}(\pi_{-i}(H_{-i}^t))\mid \pi \right].\]
We used $\alpha = 0.1$ and $\eta = \alpha / 10$, except for with opponent update on Leduc, where we used $\eta = \alpha / 20$, and with subgame updates on Leduc, which used $\eta = \alpha / 50$.
We again observe empirical convergence.

\begin{figure}[h!] 
\includegraphics[width=\linewidth]{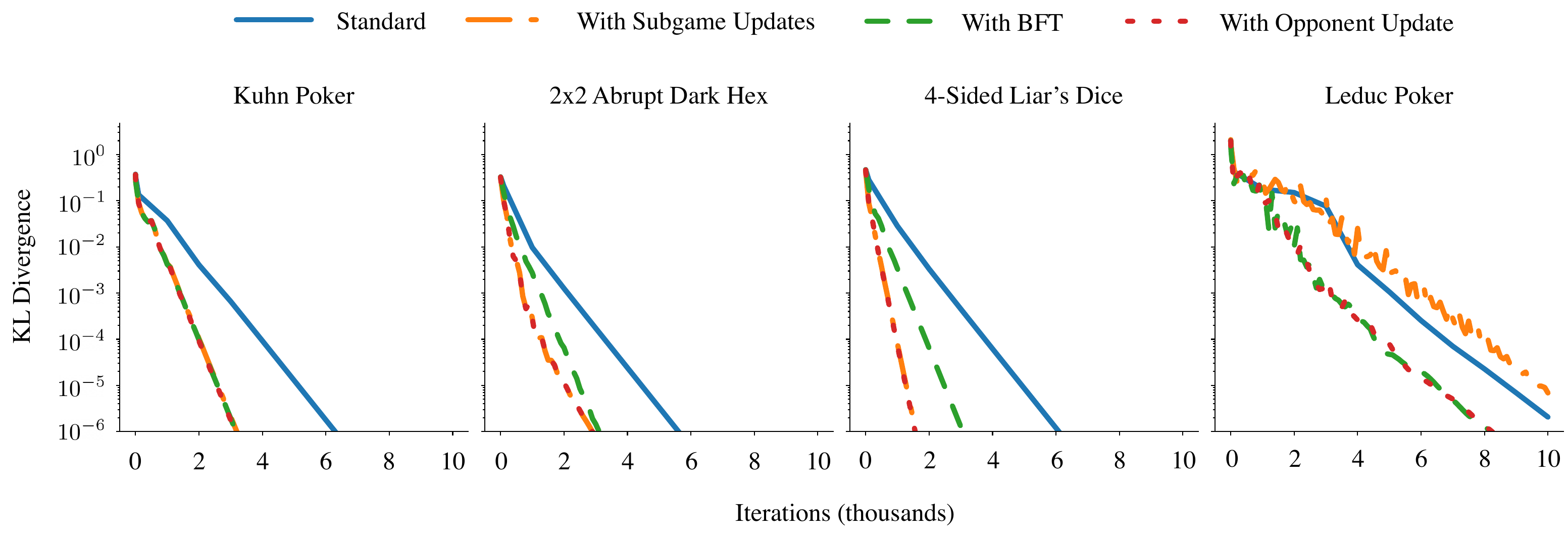}
\includegraphics[width=\linewidth]{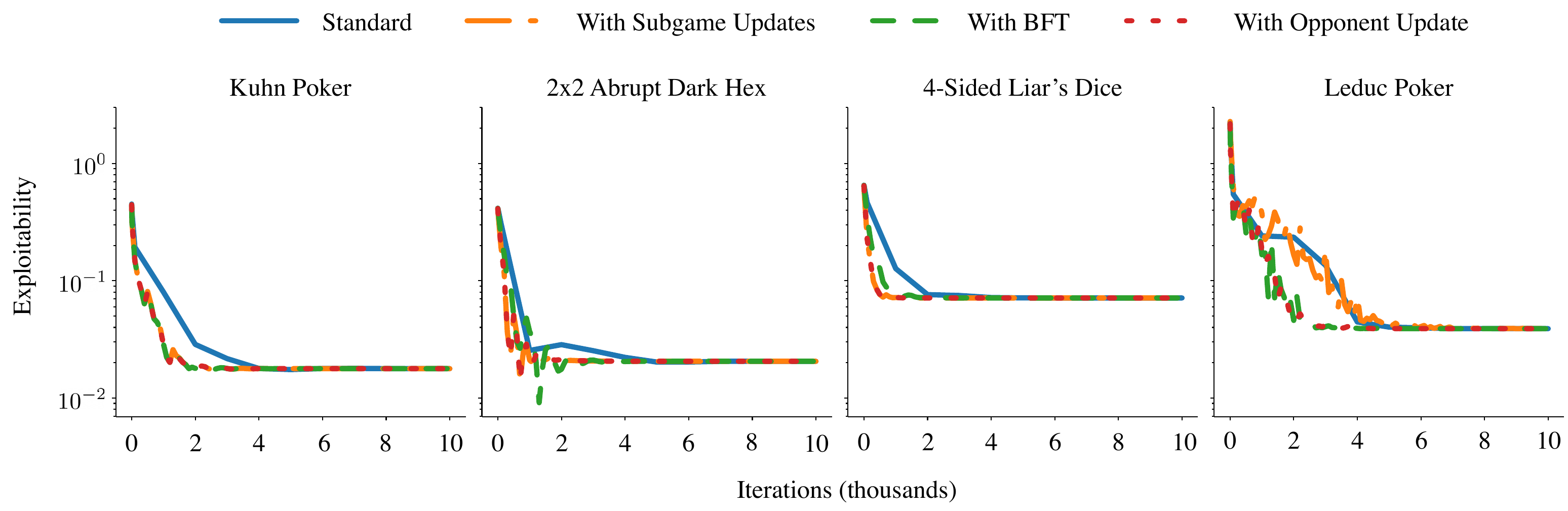}
\caption{Solving for MiniMaxEnt equilibria using last-iterate algorithm analogues of variants of MMDS.}
\label{fig:qre_mme}
\end{figure}

\paragraph{Solving for Nash Equilibria}

Next we show that these last-iterate algorithm analogues can be made to converge to Nash equilibria by annealing the amount of regularization used.
We show that these results in Figure \ref{fig:anneal_expl2} compared against CFR \citep{cfr}.
The hyperparameters for these results are shown in Table \ref{tab:hyper1}.

\begin{figure}[H] 
\includegraphics[width=\linewidth]{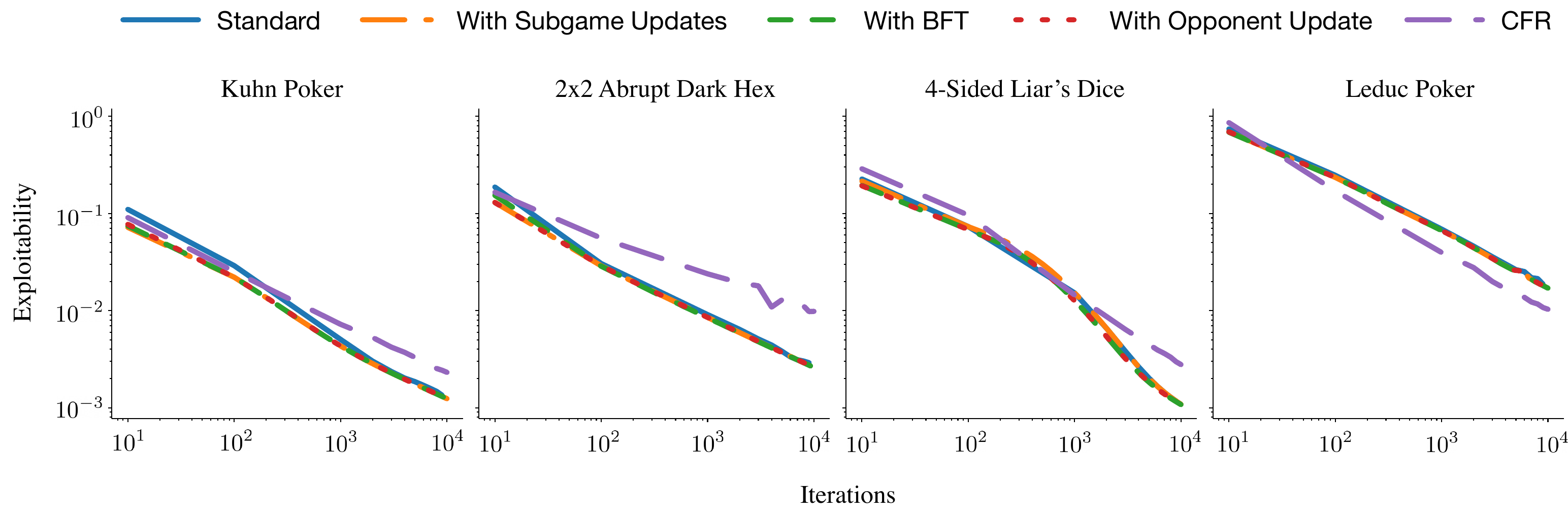}
\caption{Exploitability of different MMDS analogues with annealed regularization.}
\label{fig:anneal_expl2}
\end{figure}

\begin{table}[H] 
\scalebox{.95}{\begin{tabular}{c|cccc} 
Method\textbackslash{}Game & Kuhn Poker                                                    & \parbox[t][0.7cm]{1.8cm}{\centering 2x2 Abrupt Dark Hex}                                           & 4-Sided Liar's Dice                                             & Leduc Poker                                                     \\
\toprule
Standard            & $\alpha_t = \eta_t = \frac{1}{\sqrt{t}}$ & $\alpha_t = \eta_t = \frac{1}{\sqrt{t}}$ & $\alpha_t = \frac{1}{\sqrt{t}},  \eta_t = \frac{2}{\sqrt{t}}$   & $\alpha_t = \frac{5}{\sqrt{t}},  \eta_t = \frac{1}{\sqrt{t}}$   \\
With Subgame Updates          & $\alpha_t = \eta_t = \frac{1}{\sqrt{t}}$ & $\alpha_t = \eta_t = \frac{1}{\sqrt{t}}$ & $\alpha_t = \frac{1}{\sqrt{t}},  \eta_t = \frac{1}{2 \sqrt{t}}$ & $\alpha_t = \frac{5}{\sqrt{t}},  \eta_t = \frac{1}{5 \sqrt{t}}$ \\
With BFT                 & $\alpha_t = \eta_t = \frac{1}{\sqrt{t}}$ & $\alpha_t = \eta_t = \frac{1}{\sqrt{t}}$ & $\alpha_t = \frac{1}{\sqrt{t}},  \eta_t = \frac{2}{\sqrt{t}}$   & $\alpha_t = \frac{5}{\sqrt{t}},  \eta_t = \frac{1}{2 \sqrt{t}}$   \\
With Opponent Update            & $\alpha_t = \eta_t = \frac{1}{\sqrt{t}}$ & $\alpha_t = \eta_t = \frac{1}{\sqrt{t}}$ & $\alpha_t = \eta_t = \frac{1}{\sqrt{t}}$   & $\alpha_t = \frac{5}{\sqrt{t}},  \eta_t = \frac{1}{2 \sqrt{t}}$
\end{tabular}}
\caption{Schedules for Figure  \ref{fig:anneal_expl2}.}
\label{tab:hyper1}
\end{table}

\paragraph{Solving for Nash Equilibria with MiniMaxEntRL objectives}
We also show analogous results for MiniMaxEnt objectives in Figure \ref{fig:mme_anneal_expl}.
The hyperparameters for these experiments are shown in Table \ref{tab:hyper2}.
\begin{figure}[H] 
\includegraphics[width=\linewidth]{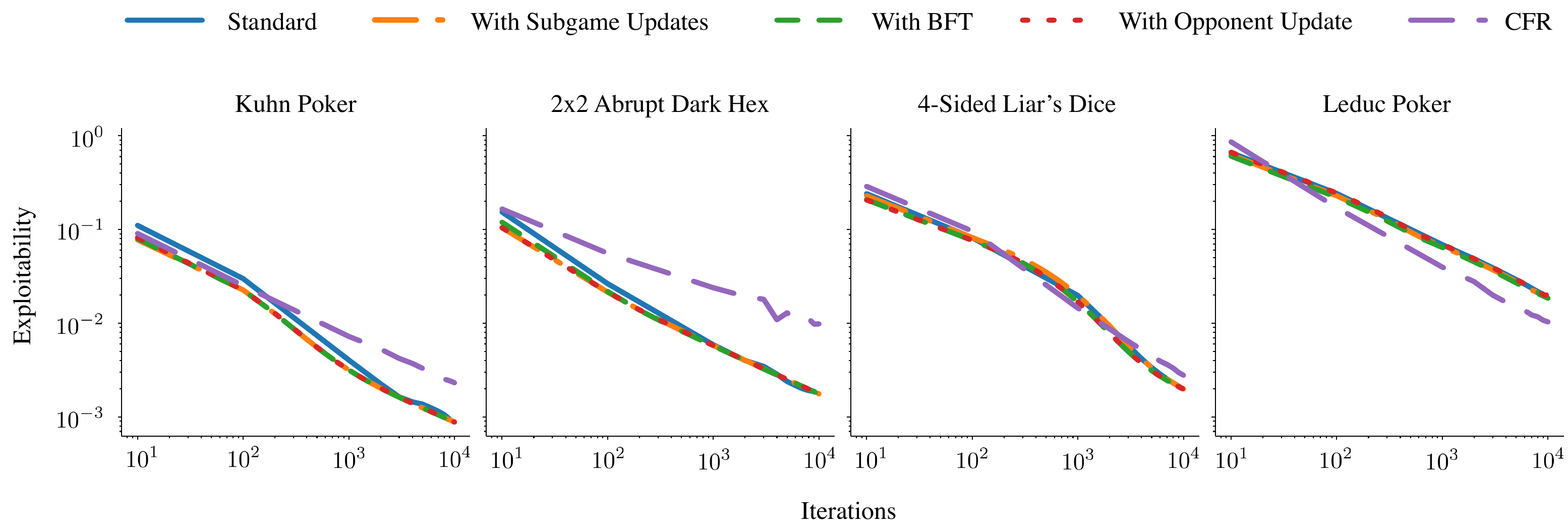}
\caption{Exploitability of different MMDS analogues under MiniMaxEnt objectives with annealed regularization.}
\label{fig:mme_anneal_expl}
\end{figure}

\begin{table}[H] 
\scalebox{.95}{\begin{tabular}{c|cccc} 
Method\textbackslash{}Game & Kuhn Poker                                                    & \parbox[t][0.7cm]{1.8cm}{\centering 2x2 Abrupt Dark Hex}                                           & 4-Sided Liar's Dice                                             & Leduc Poker                                                     \\
\toprule
One-Step Search            & $\alpha_t = \eta_t = \frac{1}{\sqrt{t}}$ & $\alpha_t = \eta_t = \frac{1}{\sqrt{t}}$ & $\alpha_t = \frac{1}{\sqrt{t}},  \eta_t = \frac{2}{\sqrt{t}}$   & $\alpha_t = \frac{5}{\sqrt{t}},  \eta_t = \frac{1}{\sqrt{t}}$   \\
Multi-Step Search          & $\alpha_t = \eta_t = \frac{1}{\sqrt{t}}$ & $\alpha_t = \eta_t = \frac{1}{\sqrt{t}}$ & $\alpha_t = \frac{1}{\sqrt{t}},  \eta_t = \frac{1}{2 \sqrt{t}}$ & $\alpha_t = \frac{5}{\sqrt{t}},  \eta_t = \frac{1}{5 \sqrt{t}}$ \\
BFT Search                 & $\alpha_t = \eta_t = \frac{1}{\sqrt{t}}$ & $\alpha_t = \eta_t = \frac{1}{\sqrt{t}}$ & $\alpha_t = \frac{1}{\sqrt{t}},  \eta_t = \frac{2}{\sqrt{t}}$   & $\alpha_t = \frac{5}{\sqrt{t}},  \eta_t = \frac{1}{\sqrt{t}}$   \\
Opponent Search            & $\alpha_t = \eta_t = \frac{1}{\sqrt{t}}$ & $\alpha_t = \eta_t = \frac{1}{\sqrt{t}}$ & $\alpha_t = \eta_t = \frac{1}{\sqrt{t}}$   & $\alpha_t = \frac{5}{\sqrt{t}},  \eta_t = \frac{1}{2 \sqrt{t}}$
\end{tabular}}
\caption{Schedules for Figure  \ref{fig:mme_anneal_expl}.}
\label{tab:hyper2}
\end{table}

\subsection{Hanabi}

For our Hanabi experiments, we used $\eta=20$ for the MDS results in Tables \ref{tab:5c8h} and \ref{tab:7c4h}.
We performed search with 10,000 samples.

\subsection{3x3 Abrupt Dark Hex and Phantom-Tic-Tac-Toe}

For our 3x3 Abrupt Dark Hex and Phantom-Tic-Tac-Toe experiments with a uniform blueprint, we used $\eta=50$, $\alpha=0.01$, and set $\rho$ to be uniform.
For the MMD(1M) blueprint, used $\eta=10$, $\alpha=0.05$, and set $\rho$ to be uniform.
For particle filtering, we sampled 10 particles for the uniform blueprint and 100 particles for the MMD(1M) blueprint from the start of the game independently at every decision point to reduce bias.
For the baselines, we used the same setup as \citet{mmd}.

\end{document}